\DeclareMathOperator*{\argmax}{arg\,max}
\DeclareMathOperator*{\argmin}{arg\,min}
\newtheorem{lemma}{Lemma}
\begin{document}
%
\title{Reinforcement Learning under Threats}
\author{
Victor Gallego \\
  ICMAT-CSIC \\
  \texttt{victor.gallego@icmat.es} \\
   \And
   Roi Naveiro \\
  ICMAT-CSIC \\
   \texttt{roi.naveiro@icmat.es} \\
   \And
   David Rios Insua \\
   ICMAT-CSIC \\
   \texttt{david.rios@icmat.es} \\
}
\maketitle
\begin{abstract}
In several reinforcement learning (RL) scenarios, 
mainly in security settings, 
there may be adversaries trying to interfere with the reward generating process.
In this paper, we introduce Threatened Markov Decision Processes (TMDPs), 
which provide a framework to support a decision maker against a potential adversary
in RL. Furthermore, we propose a level-$k$ thinking scheme 
resulting in a new learning framework to deal with TMDPs. After introducing our framework and
deriving theoretical
results, relevant empirical evidence is given via extensive experiments, 
showing the benefits of accounting for adversaries while the agent learns.
\end{abstract}

\section{Introduction}

\textit{Markov decision processes} (MDP) \cite{howard:dp} provide 
a mathematical framework for modeling a single agent making decisions 
while interacting within an environment. We refer to this agent as the
decision maker (DM, she). MDPs have been widely used to study 
reinforcement learning (RL) problems. More precisely, a MDP consists of a tuple $\left( \mathcal{S}, \mathcal{A}, \mathcal{T}, r\right)$ where $\mathcal{S}$ is the state space; $\mathcal{A}$ denotes the set of actions available to the agent; $\mathcal{T}: \mathcal{S} \times \mathcal{A} \rightarrow \Delta (\mathcal{S})$ is the transition distribution, where $\Delta(X)$ denotes the set of all distributions over set $X$;
and, finally, $r : \mathcal{S} \times \mathcal{A}  \rightarrow \Delta(\mathbb{R}) $
is the reward distribution (the utility the agent perceives from a given state
and action). A common approach to solving MDPs is based on Q-learning, \cite{sutton1998reinforcement}. 
In it, the agent maintains a table $Q : \mathcal{S} \times \mathcal{A} \rightarrow \mathbb{R}$ that estimates the DM's expected cumulative reward, 
iterating according to the following update equation

\begin{align}\label{eq:ql}
\begin{split}
Q(s,a) &:= (1 - \alpha) Q(s, a)  + \\ &+ \alpha \left(r(s,a) + \gamma\max_{a'} Q(s', a') \right),
\end{split}
\end{align}
where $\alpha$ is a learning rate hyperparameter
and $s'$ is the state the agent arrives at after choosing action $a$ in 
state $s$ and receiving reward $r(s,a)$. While learning, the agent
could choose actions according to a greedy policy ($\pi(s) = \argmax_a Q(s, a) $) yet it
is crucial to add stochasticity so that the agent can balance the exploration-exploitation trade-off, for instance with an $\epsilon-$greedy policy.

However, when non stationary environments are considered, as when there are other 
learning agents that interfere with the DM's rewards, Q-learning leads
to suboptimal results \cite{marl_over}. 


\subsection{Related Work}

Several extensions of Q-learning in multi-agent settings have been developed 
in the literature, including minimax-Q \cite{littman1994markov}, 
Nash-Q \cite{hu2003nash} or friend-or-foe-Q \cite{littman2001friend},
to name but a few. 
We propose here to extend Q-learning from an Adversarial Risk Analysis (ARA), 
\cite{rios2009adversarial} perspective, in particular, through 
a level-$k$ scheme \cite{stahl1994experimental}, \cite{stahl1995players}.

Within the bandit literature, the celebrated \cite{auer1995gambling} introduced 
a non-stationary setting in which the reward process is controlled by an adversary.
The adversarial machine learning literature has predominantly focused on the supervised setting \cite{biggio2017wild}. 
Other recent works tackle the problem of adversarial examples in RL \cite{huang2017adversarial,lin2017tactics} though they focus on visual inputs. 

Moreover, previous game-theoretical approaches to this problem have focused on modeling the whole multi-agent system as a game. Instead we shall face the problem of prescribing decisions to a single agent
versus her opponents, augmenting the MDP to account for potential adversaries. We present such variant of MDPs,
which we call Threatened MDPs (TMDPs), in the next section.

\section{Threatened MDPs}

In similar spirit to other reformulations of MDPs such as Constrained Markov Decision Processes (CMDP) \cite{altman1999constrained} or Configurable Markov Decision Processes (Conf-MDP) \cite{2018arXiv180605415M}, we propose an augmentation of the MDP
to account for the presence of adversaries. In this paper, we restrict to the case of a DM facing a single opponent (he), leaving the extension to 
a setting with multiple adversaries for future work.

A \emph{Threatened Markov Decision Process} (TMDP) is a tuple $\left( \mathcal{S}, \mathcal{A}, \mathcal{B}, \mathcal{T}, r, p_A \right) $ 
in which $\mathcal{S}$ is the state space; $\mathcal{A}$ denotes the set of actions available to the supported agent; $\mathcal{B}$ designates the set of threat actions, or actions
available to the adversary; $\mathcal{T}: \mathcal{S} \times \mathcal{A} \times \mathcal{B} \rightarrow \Delta(\mathcal{S})$ is the transition distribution; 
$r : \mathcal{S} \times \mathcal{A} \times \mathcal{B} \rightarrow \Delta(\mathbb{R}) $ is the reward distribution (the utility the agent perceives from a given state
and action pair);  and $p_A (b | s)$ models the beliefs that the DM has about
his opponent move, i.e., a distribution over $\mathcal{B}$ for each state $s \in \mathcal{S}$.

We propose to replace the standard Q-learning rule (Eq.\ref{eq:ql})  by 

\begin{align}\label{eq:lr}
\begin{split}
Q(s, a, b) &:= (1 - \alpha)Q(s, a, b) + \\ &+ \alpha \left( r(s,a,b) + \gamma \max_{a'} \mathbb{E}_{p_A(b|s')} \left[ Q(s',a',b)  \right]  \right)
\end{split}
\end{align}
and compute its expectation over the opponent's action argument

\begin{align}
Q(s,a) := \mathbb{E}_{p_A(b|s)} \left[ Q(s,a,b) \right].
\end{align}

This may be used to compute an $\epsilon-$greedy policy for the DM, i.e., choosing
with probability $(1-\epsilon)$ the action $a = \argmax_a  \left[ Q(s,a)  \right] $ or a uniformly random action with probability $\epsilon$ when 
the DM is at state $s$. 
Appendix \ref{app:proofs} contains two lemmas showing
that the previous update rules are fixed point iterations of contraction mappings.


However, in real life scenarios there will be uncertainty regarding the adversary's policy $p_A (b | s)$. Therefore, we propose using a level-$k$ scheme \cite{rios2009adversarial}
 to learn the opponent model. 
In general, we consider both the DM and the adversary 
as rational agents that aim to maximize their respective expected cumulative rewards, though we start with a case in which the adversary is considered non-strategic (Section \ref{sec:non}). Then, we go up a level in the level-$k$ hierarchy, considering the adversary a level-1 agent and the DM a level-2 one (Section \ref{sec:k}).


\subsection{Non-strategic opponent}\label{sec:non}

We begin by considering a stateless setting. The Q-function may be written then
as $Q(a_i,b_j)$, with $a_i \in \mathcal{A}$ the action chosen by the DM, and $b_j \in \mathcal{B}$ the action chosen by the adversary. We assume that the supported DM is a joint action learner (i.e., she observes her opponent's actions after he has committed them). At every iteration, the DM shall choose her action maximizing her expected cumulative reward. However, she needs to predict the action $b_j$ chosen by her opponent. A typical option is to model her adversary using fictitious play (FP), i.e., she may compute the expected utility of action $a_i$ via

\[ \psi(a_i) = \sum_{b_j \in \mathcal{B}} Q(a_i, b_j) p_{A}(b_j) \]
where $p_A (b_j)$ reflects $A$'s beliefs about her opponent's actions and is computed using the empirical frequencies of the opponent past plays.
Then she may choose the action $a_i \in \mathcal{A}$ that
maximizes her expected utility. In the following sections, we refer to this variant as FPQ-learning.

As described in \cite{rios1}, it is possible to re-frame fictitious play from a Bayesian perspective. Let $p_j$ be the probability that the opponent chooses action $b_j$. We may place a Dirichlet prior $(p_1 , \ldots, p_n) \sim \mathcal{D}(\alpha_1,\ldots,\alpha_n)$. Then, the posterior has the analytical form $\mathcal{D}(\alpha_1 + h_1,\ldots,\alpha_n + h_n)$, with  $h_i$ being the count of action $b_i$, $i=1,...,n$. If we denote the posterior density function as $f(p|h)$, then the DM would choose the action $a_i$ maximizing her expected utility, which now takes the form

\begin{eqnarray*}
& \psi(a_i) & = \int \left[\sum_{b_j \in \mathcal{B}} Q(a_i, b_j) p_j\right] f(p|h) dp \\
&=& \sum_{b_j \in \mathcal{B}} Q(a_i, b_j) \mathbb{E}_{p|h}[p_i] \propto  \sum_{b_j \in \mathcal{B}} Q(a_i, b_j) (\alpha_i + h_i).
\end{eqnarray*}
The Bayesian perspective may benefit the convergence of Q-learning, as we may include prior information about the adversary behavior when relevant.

Generalizing the previous approach to account for states is straightforward. Now the Q-function has the form $Q(a_i, b_j, s)$, where $s$ is the state of the TMDP. The DM may need to asses probabilities of the form $p_A(b_j | s)$, since it is natural to expect that her opponent behaves differently depending on the state of the game and, consequently, depending also on previous actions. As before, the supported DM may choose her action at state $s$ by maximizing

\[ \psi_s(a_i) = \sum_{b_j \in \mathcal{B}} Q(a_i, b_j,s) p_{A}(b_j|s).  \]

Since the state space $\mathcal{S}$ may be huge (or even continuous), keeping track of $p_A(b_j|s)$ may incur in prohibitive memory costs. Bayes rule may turn out to be useful, using

\[
p_{A}(b_j| s) \,\, \propto \,\, p(s| b_j)p(b_j).
\]
\cite{tang2017exploration} propose an efficient method using a hash table or a bloom filter to maintain a count of the number of times an agent visits each state $s$, $p(s)$. This is only used in the context of single-agent RL to assist for better exploration of the environment. We propose to keep track of $|\mathcal{B}| = n$ bloom filters, one for each distribution $p(s|b_j)$, for tractable computation of the opponent's intentions in the TMDP setting.

The previous scheme may be transparently integrated with the Bayesian paradigm: we only need to store an additional array with the Dirichlet prior parameters $\alpha_i$, $i=1,\ldots, n$ 
for the $p(b_j)$ part. Potentially, we could store initial pseudocounts as priors for each $b_j | s$ initializing the bloom filters with the corresponding parameter values.

If we assume the opponent to have memory of the previous stage actions, we could straightforwardly extend the previous scheme using the concept of mixtures of Markov chains, as described in \cite{raftery1985model}. For example, in case the opponent belief model
is $p_{A}(b_t | a_{t-1}, b_{t-1}, s_t)$, so that the adversary recalls 
 the previous actions $a_{t-1}$ and $b_{t-1}$, it could be factorized as a mixture 

\begin{align*}
\begin{split}
p_{A}(b_t | a_{t-1}, b_{t-1}, s_t) &= w_1 p_{A}(b_t | a_{t-1})  +\\ &+ w_2 p_{A}(b_t | b_{t-1})
 + w_3 p_{A}(b_t | s_t).
 \end{split}
\end{align*}
Then, if we allow for longer memories, instead of an exponential growth in the number of parameters, the complexity can be linearly controlled.

To conclude this section, we shall note that the described scheme is \emph{model agnostic}, i.e., it does not matter if we represent the Q-function using a look-up table or a deep neural network (DQN), so we expect it to be usable in both shallow and deep multi-agent RL settings.

\subsection{Level-$k$ thinking}\label{sec:k}

The previous section described how to model a level-0 opponent, i.e. a non strategic opponent, which can be practical in several scenarios. However, if the opponent is strategic, he may model the supported DM as a level-0 thinker, thus making the adversary a level-1 thinker. This chain can go up to infinity, so we will have to deal with modeling the opponent as a level-$k$ thinker, with $k$ bounded by the computational or cognitive resources of the DM.

To deal with it, we introduce a hierarchy of TMDPs in which 
 $\emph{TMDP}_{i}^k$ refers to the TMDP that agent $i$ needs to optimize,
 while considering its rival as a level-$(k-1)$ thinker.
Thus, we have the following process:

\begin{itemize}
\item If the supported DM is a level-1 thinker, she may optimize for $ \emph{TMDP}_{A}^1 $. She then models B as a level-0 thinker (using Section \ref{sec:non}).
\item If the supported DM is a level-2 thinker, she may optimize for $ \emph{TMDP}_{A}^2 $. She models B as a level-1 thinker. Consequently, this ``modeled" B optimizes $ \emph{TMDP}_{B}^1 $, and while doing so, he models the DM as level-0 (Section \ref{sec:non}).
\item In general, we have the chain of TMDPs: $$ \emph{TMDP}_{A}^k \rightarrow \emph{TMDP}_{B}^{k-1} \rightarrow \cdots \rightarrow \emph{TMDP}_{B}^{1}.$$
\end{itemize}
Exploiting the fact that we are in a repeated interaction setting (and by assumption that both agents can observe all past committed decisions and obtained rewards), each agent may  estimate their counterpart's Q-function, $\hat{Q}^{k-1}$: 
if the DM is optimizing $\emph{TMDP}_A^k$, she will keep her own Q-function (we refer to it as $Q_k$), and also an estimate $\hat{Q}_{k-1}$, of her opponent's Q-function. This estimate may be computed by optimizing $\emph{TMDP}_B^{k-1}$ and so on until $k=1$.
Finally, the top level DM's policy is given by

\[
\argmax_{a_{i_k}} Q_k(a_{i_k}, b_{j_{k-1}}, s),
\]
where $b_{j_{k-1}}$ is now given by 

\[
\argmax_{b_{j_{k-1}}} \hat{Q}_{k-1}(a_{i_{k-2}}, b_{j_{k-1}}, s)
\]
and so on, until we arrive at the induction basis (level-1) in which the opponent may be modeled using the fictitious play approach from Section \ref{sec:non}.

\begin{algorithm*}[!ht]
\begin{algorithmic}
\Require $Q_A$, $Q_B$, $\alpha_A, \alpha_B$ (DM and opponent Q-functions and learning rates, respectively).
\State Observe  transition $(s, a, b, r_A, r_B, s')$ from the TMDP environment
\State $Q_B(s,b,a) := (1 - \alpha_B)Q_B(s,b,a)  + \alpha_B (r_B + \gamma \max_{b'} \mathbb{E}_{p_B(a'|s')} \left[ Q_B(s',b', a') \right] )$ \Comment Level-1
\State Compute B's estimated $\epsilon-$greedy policy $p_A(b|s')$ from $Q_B(s,b,a)$
\State $Q_A(s,a,b) := (1 - \alpha_A)Q_A(s,a,b) + \alpha_A (r_A + \gamma \max_{a'} \mathbb{E}_{p_A(b'|s')} \left[ Q_A(s',a',b')) \right] $ \Comment Level-2 
\end{algorithmic}
\caption{Level-2 thinking update rule}
\label{alg:l2ur}
\end{algorithm*}

Note that in the previous hierarchy of policies the decisions are obtained in a greedy, deterministic manner (i.e. just by maximizing the lower level $\hat{Q}$ estimate). We may gain insight from the Bayesian / Risk Analysis communities by adding uncertainty to the policy at each level. For instance, at a certain level in the hierarchy, we could consider $\epsilon-$greedy policies that with probability $1-\epsilon$ choose an action according to the previous scheme, and with probability $\epsilon$ select a random action. Thus, we may impose distributions $p_k(\epsilon)$ at each level $k$ of the hierarchy. The mean of $p_k(\epsilon)$ may be an increasing function with respect to the level $k$ to account for the fact that in upper levels of thinking the uncertainty is higher. Other approaches to add uncertainty to the policies are left for future work.

\begin{figure}
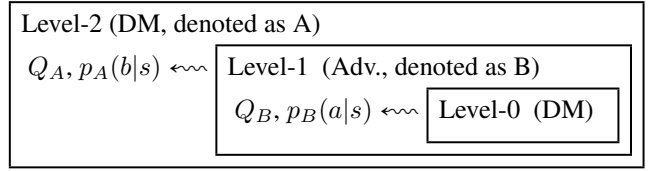

\centering
\stackinset{c}{.5in}{t}{.73in}{%
  \fboxrule=0pt\relax\framebox[2in][t]{%
  }}{\fboxrule=.75pt%
  \fbox{\stackunder{Level-2%
   \hspace*{\fill} (DM, denoted as A) }%
    {
    $Q_A$,
    $p_A(b | s) \leftsquigarrow$
    \fbox{\stackunder{Level-1 \hspace*{\fill} (Adv., denoted as B) }%
      {
      $Q_B$,
    $p_B(a | s) \leftsquigarrow$
      \fbox{\stackunder{Level-0 \hspace*{\fill} (DM) }%
        {}
        }}}}}
}
\caption{Level-$k$ thinking scheme, with $k=2$}\label{fig:lev2_scheme}
\end{figure}


Algorithm \ref{alg:l2ur} specifies the approach 
for a level-2 DM. Because she is a level-2 DM, we need to account for her Q-function, $Q_A$ (equivalently $Q_2$ from before), and that of her opponent (who will be level-1), $Q_B$ (equivalently $\hat{Q}_1)$. Figure \ref{fig:lev2_scheme} provides a schematic view of the dependencies.

\section{Experiments and Results}

To illustrate the TMDP's and level-$k$ reasoning framework, we consider two sets of experiments: repeated matrix games, with and without memory, and the adversarial environment proposed in \cite{leike2017ai}. All the code is released at \url{https://github.com/vicgalle/ARAMARL}. 
The interested reader might check the previous repository or the Supplementary Material \ref{sec:exp_det} for experimental setup details.

\subsection{Repeated matrix games}

\subsubsection{Memoryless Repeated Matrix Games}

As an initial baseline, we focus on the stateless version of a TMDP. We consider the classical Iterated Prisoner's Dilemma (IPD) \cite{axelrod84}, and 
analyze the policies learned by the supported DM, who will be the row player, against several kinds of opponents. Table \ref{tab:payoffIPD} shows the reward bimatrix $(\mathbf{r^A}, \mathbf{r^B})$ for the Prisoner's Dilemma. To construct the
iterated game, we set the discount factor $\gamma = 0.96$ in the experiments so agent $ i \in \lbrace A, B \rbrace $ aims at optimizing $\sum_{t=0}^{\infty} \gamma^t r^i_{t}$.

\begin{table}[h]
\begin{center}
\begin{tabular}{c|c|c}
\hline
 & C & D \\
\hline
C & (-1, -1) & (-3, 0) \\
\hline
D & (0, -3) & (-2, -2)  \\
\hline
\end{tabular}
\end{center}
\caption{Payoff Matrix of Prisoners' Dilemma}
\label{tab:payoffIPD}
\vspace{-2ex}
\end{table}

%

To start with, we consider that the opponent is an independent-Q learner (i.e., he uses the standard Q-function from single-agent RL and Eq. \ref{eq:ql} as learning rule). Figures \ref{fig:QvsQ} and \ref{fig:FPQvsQ} depict the utilities obtained over time, 
in cases where we model the DM as another independent Q-learner (Fig. \ref{fig:QvsQ}) or as a joint Q-learner with fictitious play (FPQ-learner), Fig. \ref{fig:FPQvsQ}.
Note that the FP playing solution (level-1) converges to the Nash equilibrium. The DM reaches the equilibrium strategy first, becoming stationary to her opponent, and thus
pulling him to play towards the equilibrium strategy. In contrast, the opponent-unaware solution would remain exploitable by another adversary (i.e., independent Q-learning does not converge). Also note that in Fig. \ref{fig:QvsQ} the variance is much bigger due to the inability of the basic Q-learning solution to deal with a non-stationary environment.

\begin{figure*}%
\centering
\subfigure[Q-learner vs Q-learner]{%
  \label{fig:QvsQ}%
  \includegraphics[height=1.8in]{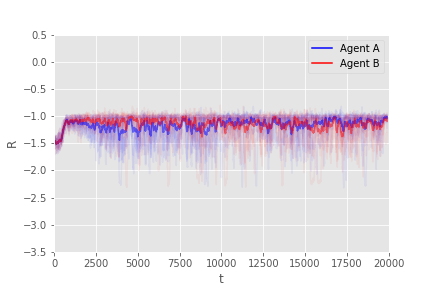}}%
  \subfigure[FPQ-learner (blue) vs Q-learner (red)]{%
  \label{fig:FPQvsQ}%
  \includegraphics[height=1.8in]{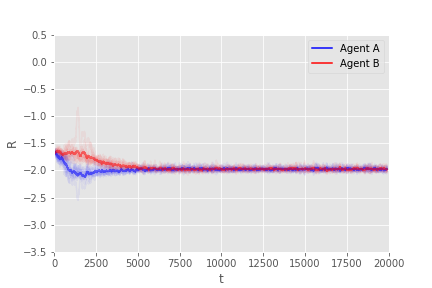}}%
  \caption{Rewards obtained in IPD. We plot the trajectories of 10 simulations with shaded colors. Darker curves depict mean rewards along the 10 simulations. }
  
 \subfigure[Q-learner vs Q-learner]{%
\label{fig:QvsQ_SH}%
\includegraphics[height=1.8in]{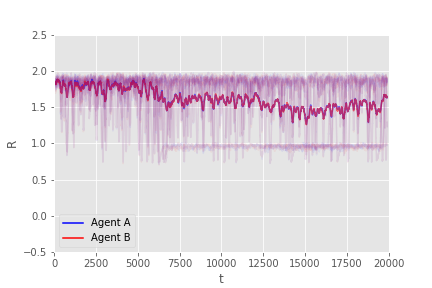}}%
\subfigure[FPQ-learner (blue) vs Q-learner (red)]{%
\label{fig:FPQvsQ_SH}%
\includegraphics[height=1.8in]{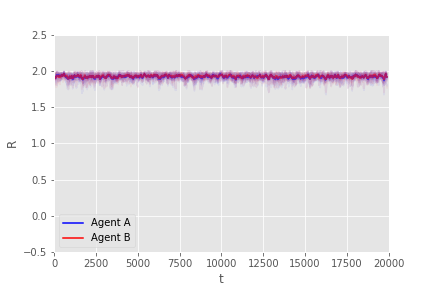}}%
\caption{Rewards in the iterated stag hunt game}\label{fig:ISH}

\subfigure[Q-learner vs Q-learner]{%
\label{fig:QvsQ_C}%
\includegraphics[height=1.8in]{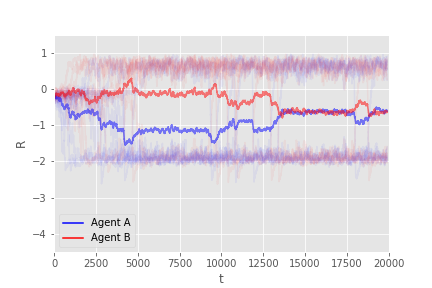}}%
\subfigure[FPQ-learner (blue) vs Q-learner (red)]{%
\label{fig:FPQvsQ_C}%
\includegraphics[height=1.8in]{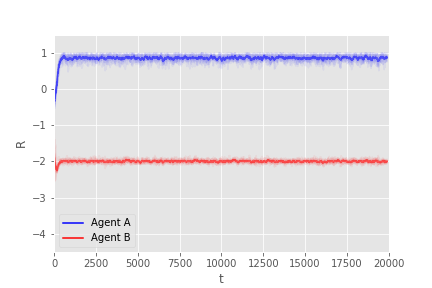}}%
\caption{Rewards in the iterated chicken game}\label{fig:IC}

\end{figure*}

We turn to another social dilemma game in which both agents must coordinate to maximize their rewards, the Stag Hunt game,
with payoff matrix shown in Table \ref{tab:payoffSG}. We focus on 
its iterated version referred to as ISH.

\begin{table}[h!]
\begin{center}
\begin{tabular}{c|c|c}
\hline
 & C & D \\
\hline
C & (2, 2) & (0, 1) \\
\hline
D & (1, 0) & (1, 1)  \\
\hline
\end{tabular}
\end{center}
\caption{Payoff Matrix of Stag Hunt}
\label{tab:payoffSG}
\vspace{-2ex}
\end{table}
We repeated the same experimental setting as in the IPD and report the results in Figure \ref{fig:ISH}. Once again, the independent learning solution cannot
tackle the non-stationarity of the environment, so it oscillates between the two Nash equilibria (C,C) and (D,D) without a clear convergence to one of them (Fig. \ref{fig:QvsQ_SH}). On the other hand, the FPQ-learner converges earlier to the socially optimal policy. Then, the environment becomes essentially stationary for its opponent, who also converges to that policy.


The last social dilemma that we consider is the Chicken game, with payoff matrix in Table \ref{tab:payoffC}. This game has two pure Nash equilibria (C, D) and (D,C).

\begin{table}[h!]
\begin{center}
\begin{tabular}{c|c|c}
\hline
 & C & D \\
\hline
C & (0, 0) & (-2, 1) \\
\hline
D & (1, -2) & (-4, -4)  \\
\hline
\end{tabular}
\end{center}
\caption{Payoff Matrix of Chicken}
\label{tab:payoffC}
\vspace{-2ex}
\end{table}
Results are reported in Figure \ref{fig:IC}. Figure \ref{fig:QvsQ_C} depicts again the ill convergence due to lack of opponent awareness in the independent Q-learning method. We noted that the instabilities continued cycling even after the limit in the displayed graphics. On the other hand, the DM with opponent modeling has an advantage and converges to her optimal Nash equilibrium (D,C) (Fig. \ref{fig:FPQvsQ_C}).

{\color{black}
In addition, we also consider another kind of opponent to show our framework can adapt to it. We consider an adversary that learns according to the WoLF-PHC algorithm \cite{bowling2001rational}. Figure \ref{fig:L1vsWoLF_C} depicts a FP-Q learner (level-$1$) against this adversary, where the latter exploits the former. However, if we go up in the level-$k$ hierarchy and model the DM as a level-$2$ Q-learner, she outperforms her opponent (Fig. \ref{fig:L2vsWoLF_C}).}

\begin{figure*}%
\centering
\subfigure[FPQ-learner (blue) vs WoLF-learner (red)]{%
  \label{fig:L1vsWoLF_C}%
  \includegraphics[height=1.8in]{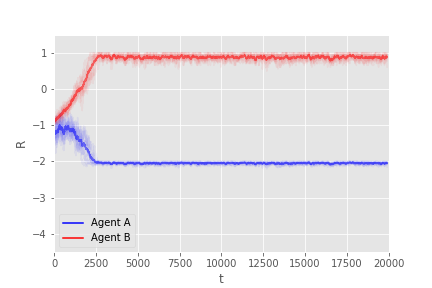}}%
  \subfigure[L2Q-learner (blue) vs WoLF-learner (red)]{%
  \label{fig:L2vsWoLF_C}%
  \includegraphics[height=1.8in]{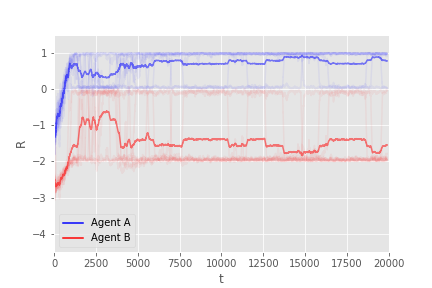}}%
  \caption{Rewards obtained in the iterated chicken game against a WoLF-PHC adversary. }
\end{figure*}

\subsubsection{Repeated Matrix Games With Memory}

In this section we give both players memory of past actions, in order to account for TMDP's with different states. We can augment the agents to have memory of the past $K$ joint actions taken. However, \cite{press2012iterated} proved that agents with a good 
memory-1 strategy can effectively force the iterated game to be played as memory-1, ignoring larger play histories. Thus, we resort to memory-1 iterated games here.

We may model the memory-1 IPD as a TMDP, in which the state $\mathcal{S}$ consists of elements of the form

$$
s_t = (a_{t-1}, b_{t-1}), \qquad t > 0
$$
describing the previous joint action, plus the initial state $s_0$ in which there is no prior action. Note that now, the DM's policy is conditioned on $\mathcal{S}$, so it may be fully specified by the $|\mathcal{S}|$ probabilities $\pi(C | CC), \pi(C | CD), \pi(C | DC), \pi(C | DD), \pi(C | s_0)$.

We assume an stationary adversary playing TitForTat (TFT), i.e. replicating the opponent's previous action, \cite{axelrod84}. He will compete
with either another agent playing FP, or with a memory-1 agent also playing FP. In Figure \ref{fig:Mem1} we represent the utilities perceived by these agents in both duels. As can be seen, a memoryless FPQ player cannot learn an optimal policy, and forces the TFT agent to play defect. In contrast, augmenting this agent to have memory of the previous move allows him to learn the optimal policy (TFT), that is, he learns to cooperate.

\begin{figure}%
\centering
\includegraphics[scale=0.5]{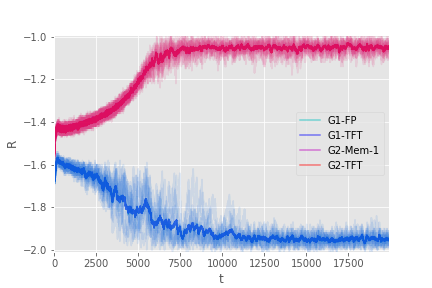}%
\caption{Rewards obtained for two different iterated games: TFT player vs FPQ memoryless player (G1) and TFT player vs FPQ memory-1 player (G2) }\label{fig:Mem1}
\end{figure}






\subsection{AI Safety Gridworlds}

A suite of RL safety benchmarks was recently introduced in \cite{leike2017ai}. We focus on the \emph{friend or foe} environment, in which the supported DM needs to travel a room and choose between two identical boxes, hiding positive and negative rewards, respectively. This reward assignment is controlled by an adaptive adversary. Figure \ref{fig:friendorfoe} shows the initial state in this game. The blue cell depicts
the DM's initial state, gray cells represent the walls of the room. Cells 1 and 2 depict the adversary's targets, who will decide which one will hide the positive reward.
This game may also be interpreted as a spatial Stackelberg game, in which the
adversary is planning to attack one of two targets, and the defender (DM) will obtain a positive reward if she travels to the chosen target. Otherwise, she will miss the attacker and will incur in a loss.

\begin{figure}
   \centering
   \includegraphics[scale=0.5]{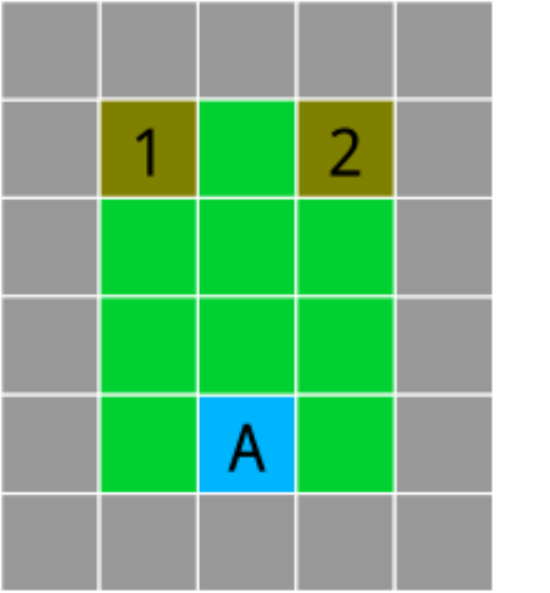}
   \caption{The \emph{friend or foe} environment from the AI Safety Gridworlds benchmark. Figure taken from \cite{leike2017ai}. } \label{fig:friendorfoe}
 \end{figure}

As shown in \cite{leike2017ai}, the \emph{deep Q-network} (and similarly the independent tabular Q-learner as we will show) fails to achieve optimal results because the reward process is controlled by the adversary. We show that by explicitly modeling the adversary we actually
improve Q-learning methods to achieve optimal utilities.

\subsubsection{Stateless Variant}\label{sec:statv}

We first consider a simplified environment with a singleton state and two actions. In a similar spirit to \cite{leike2017ai}, the adaptive opponent estimates the DM's actions using an exponential smoother. Let $\bm{p} = (p_1, p_2)$ be the probabilities
with which the DM will choose targets 1 or 2, respectively,
as estimated by the opponent. Then, at every iteration he updates his knowledge
through 

$$
\bm{p} := \alpha \bm{p} + (1 - \alpha) \bm{a}
$$
where $0 < \alpha < 1$ is a learning rate, unknown from the DM's point of view, and $\bm{a} \in \lbrace (1, 0), (0, 1) \rbrace$ is a one-hot encoded vector indicating whether the DM has chosen targets 1 or 2. We consider an adversarial opponent which places the positive reward in target $t = \argmin_i (\bm{p})_i$.
As an example, in the beginning of a game, the opponent has estimate $\bm{p} = (0.5, 0.5)$ of the preferred target for the DM. If she chooses target 1, then opponent's estimate of $p_1$ will increase. Henceforth, in the next round he will place the positive reward in target 2.


Since the DM has to deal with an adaptive adversary, we introduce a modification to the FP-Q learning algorithm. Leveraging the property that the Dirichlet distribution is a conjugate prior of the Categorical distribution, a modified update scheme is proposed in Algorithm \ref{alg:duwff}. 

\begin{algorithm}
\begin{algorithmic}
\State Initialize pseudocounts $ \bm{\alpha^0} = (\alpha^0_1, \ldots, \alpha^0_K)$
\For{$t = 1, \ldots, T $}
\State $\bm{\alpha^t} = \lambda \bm{\alpha^{t-1}}$ \Comment Reweight with factor $0 < \lambda < 1$
\State Observe opponent action $b^t_i, i \in \lbrace b_1, \ldots, b_K \rbrace$
\State $\alpha^t_i = \alpha^{t-1}_i + 1$ \Comment Update posterior
\State $\alpha^t_{-i} = \alpha^{t-1}_{-i}$
\EndFor
\end{algorithmic}
\caption{Dirichlet updating with a forget factor}
\label{alg:duwff}
\end{algorithm}

 It essentially allows to account for the last $\frac{1}{1 - \lambda}$ opponent actions, instead of weighting all observations equally.
For the case of a level-2 defender, as we do not know 
the actual rewards of the adversary (who will be modeled as a level-1 learner), 
 we  may model it as in a zero-sum scenario, i.e. $r_B = -r_A$. Other reward scalings for $r_B$ were also considered, though they did not qualitatively affect
 the results (See Supplementary Material \ref{sec:adv_rs}).

\begin{figure}%
\includegraphics[scale=0.55]{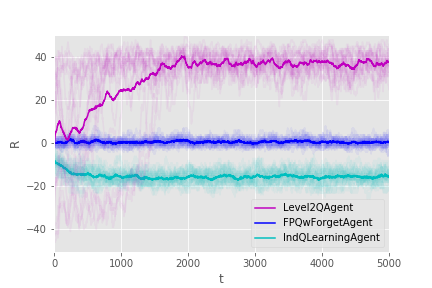}%
\caption{Rewards against the adversarial opponent}\label{fig:4C_adv}
\end{figure}

Results are displayed in Figure \ref{fig:4C_adv}. We considered four types of defenders: opponent-agnostic Q-learner, a level-1 DM with forget and a level-2 agent. The first one is exploited by the adversary and, therefore, achieves suboptimal results. In contrast, the level-1 DM with forget effectively learns an stationary optimal policy (reward 0). Finally, the level-2 agent learns to exploit the adaptive agent achieving positive reward. 

Note that the actual adversary behaves differently from how the DM models him, i.e. he is not exactly a level-1 Q-learner. Even so, modeling him as a level-1 agent gives the DM sufficient advantage.









\subsubsection{Spatial Variant}

We now compare the independent Q-learner and a level-$2$ Q-learner against the same adaptive opponent in a spatial gridworld domain, Figure \ref{fig:friendorfoe}. Targets' rewards 
are delayed until the DM arrives at one of the respective locations, obtaining $\pm 50$ depending on the target chosen by the adversary. Each step is penalized with a reward of -1 for the DM. Results are displayed in Figure \ref{fig:4C_gridworld}. Once again, the independent Q-learner is exploited by the adversary, getting even more negative rewards than in Figure \ref{fig:4C_adv} due to the penalty taken at each step. In contrast, the level-2 agent is able to approximately estimate the adversarial behavior, modeling him as a level-1 agent, thus being able to obtain positive rewards.  

\begin{figure}%
\includegraphics[scale=0.55]{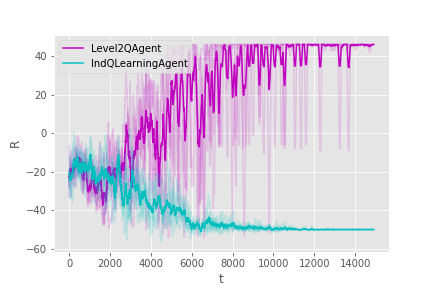}%
\caption{Rewards against the adversarial opponent in the spatial environment}\label{fig:4C_gridworld}
\end{figure}

\section{Conclusions and further work}

We have introduced TMDPs, a novel variant of MDPs. This is an original framework to support decision makers who confront adversaries that interfere with the reward generating process in reinforcement learning settings. TMDP's aim to provide one-sided prescriptive support to a DM, maximizing her subjective expected utility, taking into account potential negative actions taken by an adversary. Some theoretical results are provided, in particular, we proved that our proposed learning rule is a contraction mapping so that we may use standard RL results of convergence. In addition, we propose a scheme to model adversarial behavior based on level-$k$ reasoning about opponents. Further empirical evidence is provided via extensive experiments, with encouraging results.

Several lines of work are possible for further research. First of all, we
have limited to the case of facing just one adversary. The framework could be 
extended to the case of having multiple adversaries. In the experiments, we have just considered up to level-2 DM's, though the extension to higher order adversaries seems straightforward.

In addition, in recent years Q-learning has benefited from advances from the deep learning community, with breakthroughs such as the \emph{deep Q-network} (DQN) which achieved super-human performance in control tasks such as Atari games \cite{mnih2015human}, or as inner blocks inside systems that play Go \cite{silver2017mastering}. Integrating these advances into the TMDP setting is another possible research path. In particular, the proposed Algorithm \ref{alg:l2ur} can be 
generalized to account for the use of deep Q-networks instead of tabular Q-learning.

Finally, it might be interesting to explore similar expansions of semi-MDPs, in order to perform Hierarchical RL or allow for time-dependent rewards and transitions between states.

 \section{ Acknowledgments}
 
 R.N. acknowledges support from the Spanish Ministry for his grant FPU15-03636, V.G. acknowledges support from grant FPU16-05034. DRI is grateful to the MINECO MTM2014-56949-C3-1-R project and the AXA-ICMAT Chair in Adversarial Risk Analysis. All authors acknowledge support from the Severo Ochoa Excellence Programme SEV-2015-0554.


{
\normalsize
\bibliographystyle{aaai} 
\bibliography{marl} 

\begin{thebibliography}{}

\bibitem[\protect\citeauthoryear{Altman}{1999}]{altman1999constrained}
Altman, E.
\newblock 1999.
\newblock {\em Constrained Markov Decision Processes}, volume~7.
\newblock CRC Press.

\bibitem[\protect\citeauthoryear{Auer \bgroup et al\mbox.\egroup
  }{1995}]{auer1995gambling}
Auer, P.; Cesa-Bianchi, N.; Freund, Y.; and Schapire, R.~E.
\newblock 1995.
\newblock Gambling in a rigged casino: The adversarial multi-armed bandit
  problem.
\newblock In {\em Foundations of Computer Science, 1995. Proceedings., 36th
  Annual Symposium on},  322--331.
\newblock IEEE.

\bibitem[\protect\citeauthoryear{Axelrod}{1984}]{axelrod84}
Axelrod, R.
\newblock 1984.
\newblock {\em The Evolution of Cooperation}.
\newblock New York: Basic.

\bibitem[\protect\citeauthoryear{Biggio and Roli}{2017}]{biggio2017wild}
Biggio, B., and Roli, F.
\newblock 2017.
\newblock Wild patterns: Ten years after the rise of adversarial machine
  learning.
\newblock {\em arXiv preprint arXiv:1712.03141}.

\bibitem[\protect\citeauthoryear{Bowling and
  Veloso}{2001}]{bowling2001rational}
Bowling, M., and Veloso, M.
\newblock 2001.
\newblock Rational and convergent learning in stochastic games.

\bibitem[\protect\citeauthoryear{Busoniu, Babuska, and
  De~Schutter}{2010}]{marl_over}
Busoniu, L.; Babuska, R.; and De~Schutter, B.
\newblock 2010.
\newblock Multi-agent reinforcement learning: An overview.
\newblock 310:183--221.

\bibitem[\protect\citeauthoryear{Howard}{1960}]{howard:dp}
Howard, R.~A.
\newblock 1960.
\newblock {\em Dynamic Programming and Markov Processes}.
\newblock Cambridge, MA: MIT Press.

\bibitem[\protect\citeauthoryear{Hu and Wellman}{2003}]{hu2003nash}
Hu, J., and Wellman, M.~P.
\newblock 2003.
\newblock Nash q-learning for general-sum stochastic games.
\newblock {\em Journal of machine learning research} 4(Nov):1039--1069.

\bibitem[\protect\citeauthoryear{Huang \bgroup et al\mbox.\egroup
  }{2017}]{huang2017adversarial}
Huang, S.; Papernot, N.; Goodfellow, I.; Duan, Y.; and Abbeel, P.
\newblock 2017.
\newblock Adversarial attacks on neural network policies.
\newblock {\em arXiv preprint arXiv:1702.02284}.

\bibitem[\protect\citeauthoryear{Leike \bgroup et al\mbox.\egroup
  }{2017}]{leike2017ai}
Leike, J.; Martic, M.; Krakovna, V.; Ortega, P.~A.; Everitt, T.; Lefrancq, A.;
  Orseau, L.; and Legg, S.
\newblock 2017.
\newblock {AI} safety gridworlds.
\newblock {\em arXiv preprint arXiv:1711.09883}.

\bibitem[\protect\citeauthoryear{Lin \bgroup et al\mbox.\egroup
  }{2017}]{lin2017tactics}
Lin, Y.-C.; Hong, Z.-W.; Liao, Y.-H.; Shih, M.-L.; Liu, M.-Y.; and Sun, M.
\newblock 2017.
\newblock Tactics of adversarial attack on deep reinforcement learning agents.
\newblock {\em arXiv preprint arXiv:1703.06748}.

\bibitem[\protect\citeauthoryear{Littman}{1994}]{littman1994markov}
Littman, M.~L.
\newblock 1994.
\newblock Markov games as a framework for multi-agent reinforcement learning.
\newblock In {\em Machine Learning Proceedings 1994}. Elsevier.
\newblock  157--163.

\bibitem[\protect\citeauthoryear{Littman}{2001}]{littman2001friend}
Littman, M.~L.
\newblock 2001.
\newblock Friend-or-foe q-learning in general-sum games.
\newblock In {\em Proceedings of the Eighteenth International Conference on
  Machine Learning},  322--328.
\newblock Morgan Kaufmann Publishers Inc.

\bibitem[\protect\citeauthoryear{{Metelli}, {Mutti}, and
  {Restelli}}{2018}]{2018arXiv180605415M}
{Metelli}, A.~M.; {Mutti}, M.; and {Restelli}, M.
\newblock 2018.
\newblock {Configurable Markov Decision Processes}.
\newblock {\em International Conference on Machine Learning}.

\bibitem[\protect\citeauthoryear{Mnih \bgroup et al\mbox.\egroup
  }{2015}]{mnih2015human}
Mnih, V.; Kavukcuoglu, K.; Silver, D.; Rusu, A.~A.; Veness, J.; Bellemare,
  M.~G.; Graves, A.; Riedmiller, M.; Fidjeland, A.~K.; Ostrovski, G.; et~al.
\newblock 2015.
\newblock Human-level control through deep reinforcement learning.
\newblock {\em Nature} 518(7540):529.

\bibitem[\protect\citeauthoryear{Press and Dyson}{2012}]{press2012iterated}
Press, W.~H., and Dyson, F.~J.
\newblock 2012.
\newblock Iterated prisoner’s dilemma contains strategies that dominate any
  evolutionary opponent.
\newblock {\em Proceedings of the National Academy of Sciences}
  109(26):10409--10413.

\bibitem[\protect\citeauthoryear{Raftery}{1985}]{raftery1985model}
Raftery, A.~E.
\newblock 1985.
\newblock A model for high-order markov chains.
\newblock {\em Journal of the Royal Statistical Society. Series B
  (Methodological)}  528--539.

\bibitem[\protect\citeauthoryear{Rios~Insua, Banks, and Rios}{2016}]{rios1}
Rios~Insua, D.; Banks, D.; and Rios, J.
\newblock 2016.
\newblock Modeling opponents in adversarial risk analysis.
\newblock {\em Risk Analysis} 36(4):742--755.

\bibitem[\protect\citeauthoryear{Rios~Insua, Rios, and
  Banks}{2009}]{rios2009adversarial}
Rios~Insua, D.; Rios, J.; and Banks, D.
\newblock 2009.
\newblock Adversarial risk analysis.
\newblock {\em Journal of the American Statistical Association}
  104(486):841--854.

\bibitem[\protect\citeauthoryear{Silver \bgroup et al\mbox.\egroup
  }{2017}]{silver2017mastering}
Silver, D.; Schrittwieser, J.; Simonyan, K.; Antonoglou, I.; Huang, A.; Guez,
  A.; Hubert, T.; Baker, L.; Lai, M.; Bolton, A.; et~al.
\newblock 2017.
\newblock Mastering the game of go without human knowledge.
\newblock {\em Nature} 550(7676):354.

\bibitem[\protect\citeauthoryear{Stahl and
  Wilson}{1994}]{stahl1994experimental}
Stahl, D.~O., and Wilson, P.~W.
\newblock 1994.
\newblock Experimental evidence on players' models of other players.
\newblock {\em Journal of economic behavior \& organization} 25(3):309--327.

\bibitem[\protect\citeauthoryear{Stahl and Wilson}{1995}]{stahl1995players}
Stahl, D.~O., and Wilson, P.~W.
\newblock 1995.
\newblock On players' models of other players: Theory and experimental
  evidence.
\newblock {\em Games and Economic Behavior} 10(1):218--254.

\bibitem[\protect\citeauthoryear{Sutton and
  Barto}{1998}]{sutton1998reinforcement}
Sutton, R.~S., and Barto, A.~G.
\newblock 1998.
\newblock {\em Reinforcement learning: An introduction}, volume~1.
\newblock MIT Press.

\bibitem[\protect\citeauthoryear{Tang \bgroup et al\mbox.\egroup
  }{2017}]{tang2017exploration}
Tang, H.; Houthooft, R.; Foote, D.; Stooke, A.; Chen, O.~X.; Duan, Y.;
  Schulman, J.; DeTurck, F.; and Abbeel, P.
\newblock 2017.
\newblock \# exploration: A study of count-based exploration for deep
  reinforcement learning.
\newblock In {\em Advances in Neural Information Processing Systems},
  2750--2759.

\end{thebibliography}
}

\newpage

\appendix

{\Large {\bf APPENDIX}}

\section{PROOFS}
\label{app:proofs}
\begin{lemma}\label{lema:1}
Given $q: \mathcal{S} \times \mathcal{B} \times \mathcal{A} \rightarrow \mathbb{R}$, the following operator $\mathcal{H}$ is a contraction mapping

\begin{flalign*}
&(\mathcal{H}q) (s,b,a) = \sum_{s'} p(s'|s,b,a) \big[ r(s,b,a) +\\
&+\gamma \max_{a'} \mathbb{E}_{p(b'|s')} q(s',b',a') \big].
\end{flalign*}

\end{lemma}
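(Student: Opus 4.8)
The plan is to show that $\mathcal{H}$ is a contraction in the supremum norm $\|q\|_\infty = \sup_{s,b,a} |q(s,b,a)|$, i.e.\ that there exists a constant $\gamma \in [0,1)$ with $\|\mathcal{H}q_1 - \mathcal{H}q_2\|_\infty \le \gamma \|q_1 - q_2\|_\infty$ for all bounded $q_1, q_2$. The natural candidate for the contraction constant is exactly the discount factor $\gamma$, which lies in $[0,1)$ by assumption. Once this inequality is established, the conclusion follows immediately from the definition of a contraction mapping, and (via Banach's fixed point theorem) it also justifies that the update rule in Eq.~\ref{eq:lr} converges to a unique fixed point.

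First I would fix two functions $q_1$ and $q_2$ and form the pointwise difference $(\mathcal{H}q_1)(s,b,a) - (\mathcal{H}q_2)(s,b,a)$. The key first simplification is that the reward term $r(s,b,a)$ does not depend on $q$, so it cancels, leaving
\begin{flalign*}
&(\mathcal{H}q_1)(s,b,a) - (\mathcal{H}q_2)(s,b,a) = \gamma \sum_{s'} p(s'|s,b,a) \times \\
&\times \Big[ \max_{a'} \mathbb{E}_{p(b'|s')} q_1(s',b',a') - \max_{a'} \mathbb{E}_{p(b'|s')} q_2(s',b',a') \Big].
\end{flalign*}
Next I would bound the bracketed difference of maxima. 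The two workhorse inequalities are $\big|\max_x f(x) - \max_x g(x)\big| \le \max_x |f(x) - g(x)|$ applied to the outer maximization over $a'$, followed by the triangle inequality for expectations (linearity plus $|\mathbb{E}[X]| \le \mathbb{E}[|X|]$) applied to the inner expectation over $b'$. Chaining these gives that the bracket is bounded in absolute value by $\max_{a'} \mathbb{E}_{p(b'|s')} |q_1(s',b',a') - q_2(s',b',a')| \le \|q_1 - q_2\|_\infty$, since the integrand is dominated uniformly by the sup norm and $p(b'|s')$ is a probability distribution.

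Finally I would substitute this bound back, using that $\sum_{s'} p(s'|s,b,a) = 1$, to obtain $|(\mathcal{H}q_1)(s,b,a) - (\mathcal{H}q_2)(s,b,a)| \le \gamma \|q_1 - q_2\|_\infty$ uniformly in $(s,b,a)$; taking the supremum over the left-hand side yields the desired contraction inequality. The main obstacle, and the only place requiring genuine care, is the nested $\max_{a'}$--$\mathbb{E}_{p(b'|s')}$ structure: one must peel off the outer maximum and the inner expectation in the correct order so that each operator is shown to be nonexpansive before invoking the uniform bound. The averaging over $s'$ against the transition kernel is harmless precisely because it is a convex combination, so it does not inflate the constant beyond $\gamma$.
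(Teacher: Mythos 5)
Your proposal is correct and follows essentially the same route as the paper's proof: cancel the reward term, factor out $\gamma$, and chain the non-expansiveness of the $\max_{a'}$ operator, the expectation over $b'$, and the convex combination over $s'$ to bound everything by $\gamma\|q_1-q_2\|_\infty$ in the supremum norm. The only cosmetic difference is that the paper bounds the expectation by a maximum over $b'$ (and introduces an auxiliary state variable $z$) where you use $|\mathbb{E}[X]|\le\mathbb{E}[|X|]$ followed by the uniform bound, which is the same estimate.
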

\begin{proof}
We show that $\mathcal{H}$ is a contraction under the supremum norm, i.e., $\| \mathcal{H}q_1 - \mathcal{H}q_2 \|_{\infty} \leq \gamma \| q_1 - q_2 \|_{\infty}$.

\begin{flalign*}
& \| \mathcal{H}q_1 - \mathcal{H}q_2 \|_{\infty} = \\
&= \max_{s,b,a}  \lvert \sum_{s'} p(s'|s,b,a) \big[ r(s,b,a) + \gamma \max_{a'} \mathbb{E}_{p(b'|s')} q_1(s',b',a')\\
&-r(s,b,a) - \gamma \max_{a'} \mathbb{E}_{p(b'|s')} q_2(s',b',a') \big] \rvert = \\
&= \gamma \max_{s,b,a}  \lvert \sum_{s'} p(s'|s,b,a) \big[   \max_{a'} \mathbb{E}_{p(b'|s')} q_1(s',b',a')\\
&-  \max_{a'} \mathbb{E}_{p(b'|s')} q_2(s',b',a')\big] \rvert \leq \\
&= \gamma \max_{s,b,a} \sum_{s'} p(s'|s,b,a) \lvert  \max_{a'} \mathbb{E}_{p(b'|s')} q_1(s',b',a') \\
&- \max_{a'} \mathbb{E}_{p(b'|s')} q_2(s',b',a')\rvert \leq \\
&= \gamma \max_{s,b,a} \sum_{s'} p(s'|s,b,a)  \max_{a',z} \lvert \mathbb{E}_{p(b'|z)} q_1(z,b',a') \\
&- \mathbb{E}_{p(b'|z)} q_2(z,b',a') \rvert \leq \\
&= \gamma \max_{s,b,a} \sum_{s'} p(s'|s,b,a)  \max_{a',z,b'} \lvert  q_1(z,b',a') -  q_2(z,b',a') \rvert =\\
&= \gamma \max_{s,b,a} \sum_{s'} p(s'|s,b,a)  \|  q_1 -  q_2 \|_{\infty} = \\
&= \gamma \|  q_1 -  q_2 \|_{\infty}.
\end{flalign*}


\end{proof}


\begin{lemma}
Let $\bar{q} : \mathcal{S} \times \mathcal{A} \rightarrow \mathbb{R}$. The following operator $\bar{\mathcal{H}}$ is a contraction mapping

\begin{align*}
\begin{split}
&(\bar{\mathcal{H}}\bar{q}) (s,a) = \\ &= \mathbb{E}_{p(b|s)} \left[ \sum_{s'} p(s'|s,b,a) \left( r(s,b,a) + \gamma \max_{a'} \bar{q}(s',a') \right) \right].
\end{split}
\end{align*}

\end{lemma}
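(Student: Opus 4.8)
The plan is to mirror the proof of Lemma \ref{lema:1} and establish that $\bar{\mathcal{H}}$ is a contraction under the supremum norm, i.e. $\| \bar{\mathcal{H}}\bar{q}_1 - \bar{\mathcal{H}}\bar{q}_2 \|_{\infty} \leq \gamma \| \bar{q}_1 - \bar{q}_2 \|_{\infty}$ for any pair $\bar{q}_1, \bar{q}_2$. Once this is in hand, the Banach fixed point theorem guarantees a unique fixed point together with the standard geometric convergence of the iterates, which is exactly the property needed to invoke the classical Q-learning convergence results.

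First I would write out the difference $(\bar{\mathcal{H}}\bar{q}_1)(s,a) - (\bar{\mathcal{H}}\bar{q}_2)(s,a)$, take absolute values, and maximize over $(s,a)$. Because the reward term $r(s,b,a)$ enters both images identically, it cancels, leaving only the discounted terms with an overall factor $\gamma$ that I would pull outside. The resulting quantity is a nested average, first over $b \sim p(b|s)$ and then over $s' \sim p(s'|s,b,a)$, of $\max_{a'} \bar{q}_1(s',a') - \max_{a'} \bar{q}_2(s',a')$.

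The key step is the elementary bound
\[
\Bigl| \max_{a'} \bar{q}_1(s',a') - \max_{a'} \bar{q}_2(s',a') \Bigr| \leq \max_{a'} \bigl| \bar{q}_1(s',a') - \bar{q}_2(s',a') \bigr| \leq \| \bar{q}_1 - \bar{q}_2 \|_{\infty},
\]
combined with the triangle inequality, which lets me push the absolute value inside both averaging layers (each a convex combination). Since $p(b|s)$ and $p(s'|s,b,a)$ are probability distributions that each sum to one, the two nested weighted averages of the constant $\| \bar{q}_1 - \bar{q}_2 \|_{\infty}$ collapse back to $\| \bar{q}_1 - \bar{q}_2 \|_{\infty}$, and the surviving $\gamma$ yields the claimed contraction factor.

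I expect this to be strictly easier than Lemma \ref{lema:1}: here $\bar{q}$ carries no adversarial argument $b$, so there is no inner expectation $\mathbb{E}_{p(b'|s')}$ that must be interchanged with the maximum over $a'$. The only mild bookkeeping is confirming that the two distinct averaging layers, the outer $\mathbb{E}_{p(b|s)}$ and the inner $\sum_{s'} p(s'|s,b,a)$, each integrate to one, so that no spurious constant accumulates in front of $\gamma$.
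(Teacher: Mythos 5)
Your proposal is correct and matches the paper's intent exactly: the paper's own proof is a one-line reference ("similar to Lemma \ref{lema:1}, using $\mathbb{E}_{a} f(a) \leq \max_a f(a)$"), and your write-up is precisely that argument carried out in full --- supremum norm, cancellation of $r(s,b,a)$, the bound $\lvert \max_{a'} \bar{q}_1 - \max_{a'} \bar{q}_2 \rvert \leq \max_{a'} \lvert \bar{q}_1 - \bar{q}_2 \rvert$, and the collapse of the two convex averaging layers. Your observation that this case is easier than Lemma \ref{lema:1} (no inner expectation over $b'$ inside the max) is also accurate.
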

\begin{proof}
Similar to Lemma \ref{lema:1}, using the property that $\mathbb{E}_{a} f(a) \leq \max_a f(a)$ for any distribution $p(a)$.
\end{proof}







\section{EXPERIMENT DETAILS}\label{sec:exp_det}

We describe hyperparameters and other technical details used to obtain the results.

\subsection{REPEATED MATRIX GAMES}

\subsubsection{Memoryless Repeated Matrix Games}

In all three games (IPD, ISH, IC) we considered a discount factor $\gamma = 0.96$, a total of max steps $T = 20000$, initial $\epsilon = 0.1$ and learning rate $\alpha = 0.3$. 

The FP-Q learner started the learning process with a Beta prior $\mathcal{B}(1,1)$.

\subsubsection{Repeated Matrix Games With Memory}

In the IPD game we considered a discount factor $\gamma = 0.96$, a total of max steps $T = 20000$, initial $\epsilon = 0.1$ and learning rate $\alpha = 0.05$. 

The FP-Q learner started the learning process with a Beta prior $\mathcal{B}(1,1)$.

\subsection{AI SAFETY GRIDWORLDS}

\subsubsection{Stateless Variant}

Rewards for the DM are $50, -50$ depending on her action and the target chosen by the adversary.

We considered a discount factor $\gamma = 0.8$ and a total of $5000$ episodes. For all three agents, the initial exploration parameter was set to $\epsilon = 0.1$ and learning rate $\alpha = 0.1$. The FP-Q learner with forget factor used $\lambda = 0.8$.

\subsubsection{Spatial Variant}

Episodes end at a maximum of 50 steps or agent arriving first at target 1 or 2. Rewards for the DM are $-1$ for performing any action (i.e., a step in some of the four possible directions) or $50, -50$ depending on the target chosen by the adversary.

We considered a discount factor $\gamma = 0.8$ and a total of $15000$ episodes. For the level-2 agent, initial $\epsilon_A = \epsilon_B = 0.99$ with decaying rules $\epsilon_A := 0.995\epsilon_A$ and $\epsilon_B := 0.9\epsilon_B$ every $10$ episodes and learning rates $\alpha_A = \alpha_B = 0.05$. For the independent Q-learner we set initial exploration rate $\epsilon = 0.99$ with decaying rule $\epsilon := 0.995\epsilon$ every $10$ episodes and learning rate $\alpha = 0.05$.

\section{ADDITIONAL RESULTS}\label{sec:adv_rs}

For the experiments from Section \ref{sec:statv} we tried other models for the opponent's rewards $r_B$. Instead of assuming a minimax setting ($r_B = r_A$), where $r_B \in \lbrace -50, 50 \rbrace$, we tried also two different scalings $r_B \in \lbrace -1, 1 \rbrace$ and $r_B \in \lbrace 0, 1 \rbrace $. These alternatives are displayed in Figure \ref{fig:4C_rs}, though we found they did not qualitatively affect the results.

 \begin{figure}%
 \centering
 \subfigure[Rewards $+1$ and $0$ for the adversary]{%
 \label{fig:4C_binary_adv}%
 \includegraphics[scale=0.5]{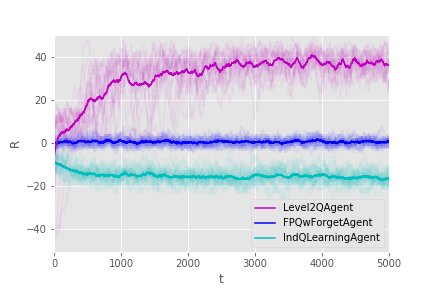}}%
 \vfill
 \subfigure[Rewards $+1$ and $-1$ for the adversary]{%
 \label{fig:4C_binary_1m1}%
 \includegraphics[scale=0.5]{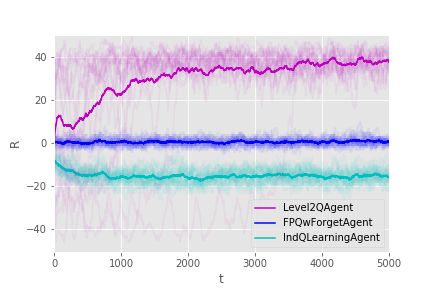}}%
 \caption{Rewards against the same adversary (exponential smoother) using different reward scalings.}\label{fig:4C_rs}
 \end{figure}

\end{document}